\def\BibTeX{{\rm B\kern-.05em{\sc i\kern-.025em b}\kern-.08em
    T\kern-.1667em\lower.7ex\hbox{E}\kern-.125emX}}
\newtheorem{theorem}{Theorem}[section]
\newtheorem{lemma}[theorem]{Lemma}
\theoremstyle{definition}
\newcommand{\kibitz}[2]{\ifnum\Comments=1{\textcolor{#1}{#2}}\fi}
\newcommand{\E}{\mathbb{E}}
\newcommand{\N}{\mathbb{N}}
\begin{document}

\title{The Phantom Steering Effect in Q\&A Websites}
\author{
\IEEEauthorblockN{Nicholas Hoernle}
\IEEEauthorblockA{
\IEEEauthorrefmark{1}
University of Edinburgh\\
Edinburgh, UK\\
n.s.hoernle@sms.ed.ac.uk}
\and
\IEEEauthorblockN{Gregory Kehne, Ariel D. Procaccia}
\IEEEauthorblockA{
Harvard University\\
Massachusetts, USA\\
\{gkehne,arielpro\}@\{g,seas\}.harvard.edu}
\and
\IEEEauthorblockN{Kobi Gal\IEEEauthorrefmark{1}\IEEEauthorrefmark{2}}
\IEEEauthorblockA{
\IEEEauthorrefmark{2}
Ben Gurion University\\
Be'er Sheva, Israel\\
kobig@bgu.ac.il}
}

\maketitle

\begin{abstract}
Badges are commonly used in online platforms as incentives for promoting contributions. 
It is widely accepted that badges ``steer'' people?s behavior toward increasing their rate of contributions before obtaining the badge.
This paper provides a new probabilistic model of user behavior in the presence of badges. 
By applying the model to data from thousands of users on the Q\&A site Stack Overflow, we find that steering is not as widely applicable as was previously understood.
Rather, the majority of users remain apathetic toward badges, while still providing a substantial number of contributions to the site.
An interesting statistical phenomenon, termed ``Phantom Steering,'' accounts for the interaction data of these users and this may have contributed to some previous conclusions about steering.
Our results suggest that a small population, approximately $\mathbf{20\%}$, of users respond to the badge incentives.
Moreover, we conduct a qualitative survey of the users on Stack Overflow which provides further evidence that the insights from the model reflect the true behavior of the community.
We argue that while badges might contribute toward a suite of effective rewards in an online system, research into other aspects of reward systems such as Stack Overflow reputation points should become a focus of the community.
\end{abstract}


\begin{IEEEkeywords}
Virtual badges, steering, goal-gradient hypothesis, amortized inference
\end{IEEEkeywords}

\section{Introduction}
A well known finding from behavioral science research is that efforts towards a goal increase with proximity to the goal. 
This phenomenon, termed the goal-gradient hypothesis, has been demonstrated in a variety of settings, from animal studies in the lab to consumer purchasing behavior~\cite{hull1932goal,kivetz2006goal}.
More recently, the goal-gradient effect was observed in people's behavior in online communities that use virtual rewards, such as badges and reputation points, to increase users' contributions to the site~\cite{mutter2014behavioral,anderson2013steering}.
Recent examples of online settings that are using badges include communication platforms such as MS teams, ride-sharing platforms such as Lyft and online learning platforms such as Duolingo.  
We study this ``steering'' phenomenon in one such community, that of Stack Overflow (SO), where users can acquire badges for making different contributions to the platform, such as editing and voting on posts.
We identify \textit{who} exhibits steering, who does not, and \textit{how} this steering behavior can be characterised from observational data. 
Our surprising result is that a large population (at least $40\%$)  of SO users who are highly active, do not appear to exhibit steering towards badges. 

We present a generative model of steering which models users as having default activity rates which they can deviate from when approaching the requirements for achieving a badge.
The model is able to fit a complex multimodal distribution over the parameters that govern users' activities. 
This allows it to capture different levels of steering in the population.  
We applied the model to data collected from thousands of SO users, and investigate the following research questions: 
{Are all badge achievers affected by the steering (or goal-gradient) hypothesis in the same way? If some users do not steer, what portion of the population falls under this category? Does the presence of these users in the data set change any conclusions that were previously drawn about the phenomenon of steering?}

Our results revealed the following insights:  
First, more than 40\% of the users are not steered, in that they  exhibit a consistent activity rate in SO that is not effected by the badge.
We prove that a ``bump'' in activity conveyed by prior work arises as an artifact of centering the data on the day of badge achievement~\cite{anderson2013steering,yanovsky2019one}. 
We call this phenomenon \emph{the Phantom Steering Effect}.  
Second, about 20\% of users are steered, in that they dramatically increase their rate of activity prior to achieving the badge.
It is the effect that this small population of steered users has on aggregate measures that has led to the previous and broader claims of steering~\cite{anderson2013steering,yanovsky2019one,mutter2014behavioral}. 
Third, the majority of these steered users decrease their activity rate beyond what is claimed in prior work~\cite{anderson2013steering}, reaching close to $0$ after the badge has been achieved. 
Lastly, our conclusions are supported by responses to a user study that included $70$ active SO participants, in which only $24\%$ of participants selected badges as a motivating factor for their contribution.
 
Our study has important ramifications for system designers who invest resources into the implementation of badge rewards systems and for researchers who wish to understand the factors that contribute towards users' continued participation in online communities.


\section{Related Work}
\label{sec:related_work}
We begin by relating to the general literature on the effect of badges in online communities. We then present in detail the specific work of \cite{anderson2013steering} which helps to motivate the generative models that we develop in Section~\ref{sec:modeling_user_contrbutions}.

\subsection{The Study of Online Badges}
\label{sec:general_badges}

The goal-gradient hypothesis stems from behavioral research where animals were observed to increase their effort as they approach a reward~\cite{hull1932goal, kivetz2006goal}. 
Kivetz et al.,\cite{kivetz2006goal} studied the behavior of different populations of people who were working toward various rewards.
They concluded that the goal-gradient hypothesis also holds true for people. 
Subjects who received a loyalty card, which tracked the number of coffees purchased from a local coffee chain, purchased coffee significantly more frequently the closer they were to earning a free cup of coffee. 
The authors recognized the existence of a group of participants who did not complete their coffee cards for the duration of the study, and did not exhibit a noticeable change in their coffee purchasing habits. 
They concluded that the loyalty card effect was constrained to the population of participants who handed in their completed loyalty cards in exchange for the free-coffee reward.

Anderson et al.,\cite{anderson2013steering} and Mutter and Kundisch\cite{mutter2014behavioral} were the first to study the goal-gradient hypothesis in online settings. 
They studied the \textit{observed} effect of badges on the behavior of participants in large Q\&A sites.
Both studies found evidence that users increase their rate of work as they approach the badge threshold.
However, they did not address the possibility that some users might achieve the badge as a consequence of their routine interactions on the website rather than being steered by the badge. 
There is a possibility that people's actions are governed by motivations other than badges. 
We extend these works by allowing for this possibility, such that we can characterize the true changes to users' behavior under the influence of a badge, and distinguish this from the case where users do not noticeably change their interaction behavior.

Other studies have independently confirmed that the presence of online badges increases the probability that a user will act in a manner to achieve the badge, as well as the rate at which the user will perform those actions~\cite{kusmierczyk2018causal,yanovsky2019one,bornfeld2017gamifying,ipeirotis2014quizz}. Kusmierczyk and Gomez-Rodriguez\cite{kusmierczyk2018causal} highlight the importance of modeling the ``utility heterogeneity'' among the users but they study badges which have a threshold of $1$ and do not characterize \textit{how} one might change one's behavior in the presence of the badge incentive. 
Yanovsky et al.,\cite{yanovsky2019one} study the presence of different populations within the SO database by employing a clustering routine. 
They discovered notably different responses to the badge based on the cluster that a user belongs to. 
Their study did not acknowledge the possibility that the observed data might be consistent with a hypothesis that some users do not exhibit steering.
Anderson et al.,\cite{anderson2014engaging} studied the implementation of a badge system in a massive open online course and they provide a prescriptive system for the design of badges such that there is a maximum effect on the population. 
Zhang et al.,\cite{zhang2019reading} suggest that SO create new badges to encourage users to integrate helpful comments into the accepted answers. 
They thereby present an example of how system designers might use a badge to encourage a desired behavior from their user base.
In contrast to this, we suggest that badges have a limited scope and work should be completed to understand other motivations that the users' have such that better and more effective rewards can be designed to motivate online communities.

\subsection{A Utility Model for Steering}
\label{sec:anderson_model}

Most relevant to our work is the paper of \cite{anderson2013steering} who present a parametric description of a user's utility when the user is steered by badges.
The model describes a user as having a preferred distribution from which actions are sampled. 
As users approach the required threshold for achieving a badge, they \textit{deviate} from their preferred distributions. 
The deviation from the preferred distribution is controlled by the {utility} gained by achieving the badge and the {cost} for deviating from the preferred distribution. 

We let $A^d_u$ refer to the distribution over the count of actions that a user $u$ takes on day $d$.
The user's utility is a function of  $A^d_u$ and it is the sum of three terms.%
    \footnote{Our notation differs slightly from that of \cite{anderson2013steering}. Anderson et al.,\cite{anderson2013steering} uses a parameter $\mathbf{x_a}$ to refer to a user's distribution over the next action. We rather use $A_u^d$ to denote the distribution over the count of actions on a particular day. The two are linked (the distribution over the next action influences the count of actions on a specific day), however, we choose to model directly the data that is available from SO rather than a quantity that we do not observe.}
The first term, $\sum_{b \in B} I_b V_b$, is the non-negative
value that a user derives from already-attained badge rewards (where $V_b$ is the assumed value of a badge and $I_b$ is the indicator that the user has attained badge $b$). 
The second term, $\theta\mathbb{E}_{A_u^d}[U_{u,d+1}(A_u^{d+1})]$, describes the user's expected future utility, discounted by $\theta$, when acting under the distribution $A_u^d$.
The final term, $g(A_u^d,P_u^d)$, is a cost function that penalises the user for deviating from the preferred distribution $P_u^d$ on that day. 
The cost $g$ represents  the unwillingness of the users to change their behavior, and it is in tension with the users' desire to achieve future badges.

The utility on day $d$ for user $u$ is then~\cite{anderson2013steering}:
\begin{equation*}
    \label{eq:anderson_modified}
    U_{u,d}(A_u^d) =  \sum\limits_{b \in B}I_b V_u^b + \theta \mathbb{E}_{A_u^d}[U_{u,d+1}(A_u^{d+1})] - g(A_u^d, P_u^d) 
\end{equation*}

It is important to note that the cost term $g$ is non-zero only when users deviate from their preferred distribution $P_u^d$. 
As such, this model assumes users deviate only to attain the value from the badge and only if that value outweighs the cost that is paid for deviating. 
This means that a deviation on the rate of actions which are incentivized by the badge must be an increase before the badge is achieved and cannot be an increase after the badge is achieved. We will make these same assumptions in the models presented in Section~\ref{sec:generative_model}.

This utility-based model presents a compelling description of how people respond to badges; however, it was not evaluated or tested by fitting it to specific data from SO.
Rather, predictions of the model were compared to aggregated data from SO and we show in Section~\ref{sec:studying_mean_action_count} that the aggregated analysis from these count data can lead to incorrect conclusions.
The lack of analysis on individual level predictions limits the credibility of the study as well as its practical value --- it is difficult to apply the utility-based model to the placement of badges without a means of determining the appropriate model parameters for a given community of contributors.

In this work we address the shortcomings of the utility-based approach by introducing a probabilistic model which allows us to use the vast literature on posterior inference in such models to assist with parameter estimation~\cite{blei2014build,rezende2015variational,kingma2013auto,kingma2016improved,ranganath2014black}.
The probabilistic model has two advantages over this prior work: (1) posterior distributions for latent parameters in the model can be learnt from real-world interaction data and (2) the model's fit to data can be used to test and update scientific hypotheses (for example, in this paper we propose and validate that while some users may steer in a similar way, there exist users who do not experience steering).

\section{Modeling User Activities}
\label{sec:modeling_user_contrbutions}

We model users' activities in SO as a distribution over their action counts.
The model aims to incorporate the major aspects of the utility model from \cite{anderson2013steering} but it frames the problem such that parameters can be estimated from data and the models can be tested on their fit to unseen user action data to allow for model comparison~\cite{box1962useful,blei2014build}. Moreover, the model allows for different users to experience different levels of steering.


\subsection{A Generative Model of Steering}
\label{sec:generative_model}

Let $P_u$ be a latent parameter that controls the rate of activity for user $u$; this is the \textit{preferred distribution} of user $u$.   
$P_u$ induces a probability distribution over the action counts $A_u$ of user $u$.
Let $\beta$ denote the deviation of the user's activity from $P_u$ as a result of steering.  
The observed data for each user, $A_u$, consists of daily action counts for a predetermined number of weeks before and after achieving the badge. 
Thus, for $D$ days of interaction, $P_u$, $A_u$ and $\beta$ are all vectors of length $D$. 

Fig.~\ref{fig:graphical_model} presents three plausible models of user behavior in SO. 
White circles denote latent random variables and colored circles denote observed random variables; solid lines represent conditional dependence between the random variables.   
Model 0 (left) describes a non-steering scenario, in which the observed action counts, $A_u$, depend only on the user's preferred distribution, $P_u$. 
Model 1 (center) is a steering model in which a user deviates systematically from $P_u$ in a manner that is controlled by $\beta$.
As the values for $\beta$ increase (above 0), the users experience an increased activity rate (above their preferred distribution). 
Similarly, as $\beta$ decreases (below 0), the users experience a decreased activity rate.
Model 1 assumes that all users are steered in the same way.
Model 2 (right) relaxes this assumption by introducing a user-specific strength parameter $S_u \in (0,1)$ which mediates the effect of $\beta$ for user $u$. 
As $S_u$ decreases, the users deviate less from their base distributions. 
When $S_u$ approaches zero, the users' activity converges to that described by Model 0. 

The parameter $\beta$, that controls how a user responds to the badge, is a vector of length $D$ (each day relative to the date of badge achievement). 
To reflect the intuition developed by \cite{anderson2013steering} and explained in Section~\ref{sec:anderson_model}, we constrain $\beta$ to be non-negative before \textit{day $0$} --- the day when the user achieves the badge. Moreover, we constrain $\beta$ to be non-positive after this day to reflect the intuition that a user gains no further utility from the badge once it has been achieved (and thus does not work harder than his preferred distribution $P_u$). $\beta$ therefore implicitly includes the trade-off between the cost function $g$ and the badge utility $V$ that is discussed in Section~\ref{sec:anderson_model}.

\begin{figure}[t]
  \centering
  \includegraphics[width=.9\linewidth]{./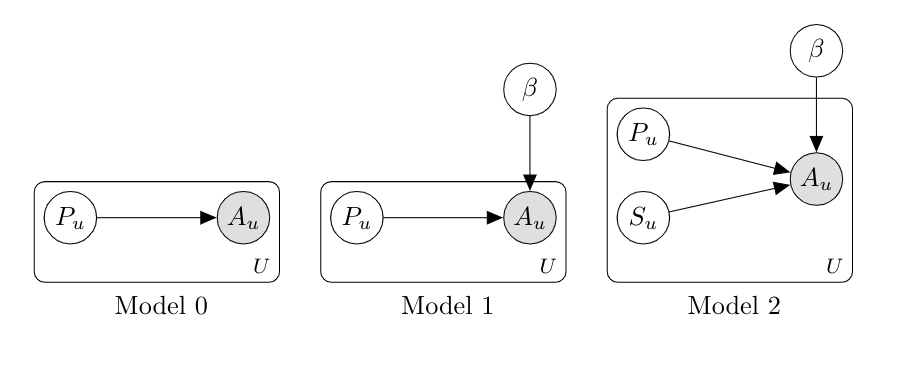}
  \caption{
  Model 0 (baseline model) has no notion of a badge -- only a user's preferred distribution induces the distribution over the observed actions. Model 1 allows for a global badge deviation ($\beta$) from a user's preferred distribution and it is experienced by all users. Model 2 has a user-specific strength parameter ($S_u$) that mediates the adherence to $\beta$.}
  \label{fig:graphical_model}
\end{figure}

\subsection{Likelihood of Action Counts}
\label{sec:likelihood}

In this section we define the parameters that govern the distribution over users' action counts in SO. 
We wish to describe a variety of behaviors, including users who contribute sporadically and those who are more consistent. 
We therefore model action counts using a zero-inflated Poisson distribution.
The zero-inflated Poisson distribution has a rate parameter $\lambda_u^d$ and a Bernoulli probability $\alpha_u^d$ associated with each user $u$ and each day $d$ of interaction. 
The Bernoulli parameter $\alpha_u^d$ describes the probability that user $u$ is active or not on a given day $d$. 
The rate parameter $\lambda_u^d$ describes the expected count of actions that the user will perform under a Poisson distribution, conditioned on the user being active. 
Note that a user can be active on the platform without performing an action (e.g., logs on to the SO website but does not contribute). 
Conceptually, this would correspond to drawing a $1$ from the Bernoulli distribution but a count of $0$ actions from the Poisson distribution.

The probability that user $u$ performs $k$ actions on day $d$ is presented in \eqref{eq:likelihood_term}. 
We refer to the parameters $\alpha_u^d$ and $\lambda_u^d$ as a user's \textit{rate parameters} for day $d$.

\begin{equation}
    \label{eq:likelihood_term}
    Pr[A_{u}^{d} = k] = 
    \begin{cases}
        (1-\alpha_u^d) + \alpha_u^d Poisson(0 \mid \lambda_u^d),& \text{if } k = 0\\
        \alpha_u^d Poisson(k \mid \lambda_u^d),              & \text{otherwise}
    \end{cases}
\end{equation}

\subsection{Deriving the Rate Parameters $\alpha_u$ and $\lambda_u$}
\label{sec:link_to_rate_params}

This section connects the rate parameters   $\alpha_u$ and $\lambda_u$ to the generative models of Section~\ref{sec:generative_model}.
Each of $P_u$, $\beta$ and $S_u$ includes one component for $\alpha_u$ and one component for $\lambda_u$.
As such, for $D$ days of interaction, $P_u=({P_{u,1},P_{u,2}})$ comprises two real-valued vectors, each of length $D$. 
$P_{u,1}$ is the user's preferred distribution that is associated with $\alpha_u$ and $P_{u,2}$ is the user's preferred distribution associated with the parameter $\lambda_u$.
Similarly, $\beta=({ \beta_{1}, \beta_{2}})$ comprises two real-valued vectors of length $D$ that are associated with $\alpha_u$ and $\lambda_u$ respectively.
Finally, $S_u = (S_{u,1}, S_{u,2})$, is a tuple of two numbers between $0$ and $1$ which mediate (multiply) $\beta_1$ and $\beta_2$ for user $u$.

Equation \eqref{eq:generative_model1} 
derives a vector of probability values $\alpha_u$ (one for each day of interaction) as the element-wise sigmoid transformation of a vector that is the addition of the user's preferred distribution $P_{u,1}$ with $\beta_{1}$ where $\beta_{1}$ is mediated by (multiplied by) $S_{u,1}$.  Equation \eqref{eq:generative_model1} also derives a vector of strictly positive rate values $\lambda_u$ (one for each day of interaction) as the element-wise softplus transformation of the vector $P_{u,2} + S_{u,2} \times \beta_{2}$.

\begin{align}
\begin{split}
    \label{eq:generative_model1}
    \alpha_{u} &= \sigma \left( P_{u,1} + S_{u,1} \times \beta_1 \right)\\ 
    \lambda_{u} &= softplus \left( P_{u,2} + S_{u,2} \times \beta_2 \right)
\end{split}
\end{align}

\noindent The complete generative description for Model 2 is as follows (Models 1 and 0 are generated in the same way, with parameter $S_u$ set to 1 and 0  respectively):
\begin{enumerate}
    \item Sample $P_u$ and $S_u$ from their prior distributions (see Section~\ref{sec:inference}).
    \item Compute  $\alpha_u$ and $\lambda_u$ using \eqref{eq:generative_model1}.
    \item Sample the vector of the counts of actions for user $u$ from the zero-inflated Poisson as in \eqref{eq:likelihood_term}. 
\end{enumerate}

\subsection{Generating the Latent Parameters $P_u$ and $S_u$}
\label{sec:inference}
 
The distributions over $P_u$ and $S_u$ can be complex and multi-modal (reflecting the intuition that people display varying activity patterns). 
Following \cite{rezende2015variational}, we represent these distributions with a simple distribution (an $m$-dimensional isotropic Gaussian) which we transform via a series of bijective mappings to form a complex and possibly multi-modal distribution~\cite{rezende2015variational,papamakarios2019normalizing}.
The use of such transformations, called normalizing flows, has been shown to improve the modeling of complex distributions~\cite{papamakarios2019normalizing}.

The output of the normalizing flows corresponds to a sample in latent space from the generating distribution for a user's preferred distribution. 
We transform the output through a feed-forward network to a real-valued vector which is $P_u$. 
In this work we constrain $P_u$ such that it can learn a different distribution for each day of the week.
This enforces that $P_u$ does not evolve with a certain trend but it does allow different days to have different expected activity rates (e.g., some users might work more during the week than on the weekend).

The distribution over the steering parameters, $S_u$, for Model $2$ are assigned to be two elements in the $m$-dimensional vector after the transformations from the normalizing flows.
In this way, the latent dimensionality for all three models in Fig.~\ref{fig:graphical_model} is kept constant.
These real values are transformed via a sigmoid function to create $S_{u,1}, S_{u,2} \in (0,1)$.

\section{Amortized Variational Inference for Steering}
\label{sec:amortized_inference}


To infer the underlying parameters in the latent space, we use amortized inference~\cite{ranganath2014black,kingma2013auto}. 
Amortized inference uses a neural network to encode a data point into the latent parameters that are associated with its approximate posterior distribution.
Moreover, the inference objective allows model comparison such that hypotheses about the data can be tested (e.g., allowing us to validate the inclusion of the steering parameter $S_u$).

A fully-specified generative model defines a joint distribution over some latent random variables ($\mathbf{z}$) and the observed random variables ($\mathbf{x}$). 
The challenge is to infer the posterior of the latent parameters given the data that was actually observed $p(\mathbf{z} \mid \mathbf{x})$. 
For all but a handful of conjugate models, the posterior is intractable to derive analytically\cite{neal1993probabilistic,blei2003latent,hoffman2013stochastic}. 
Variational inference is a popular method for approximating the intractable posterior distribution by introducing a different (and more easily sampled from and evaluated) distribution over the same latent variables, $q(z)$.
By minimizing the KL-divergence between $q(z)$ and the true posterior $p(\mathbf{z} \mid \mathbf{x})$, one obtains an approximation to the true posterior~\cite{hoffman2013stochastic}.


It is important to note that minimizing the KL-divergence between $q(z)$ and $p(\mathbf{z} \mid \mathbf{x})$ is equivalent to maximizing the variational objective, called the Evidence Lower BOund (see \cite{hoffman2013stochastic} for a derivation and discussion). This \textit{ELBO} derives its namesake from the fact that it lower-bounds the marginal log-likelihood of the data under the assumptions of the model, a fact easily derived in the next equation, where Jensen's inequality is applied in the final line. 
It is due to this lower bound on the marginal log-likelihood, that it is also common to use the ELBO for model comparison (as is done in Section~\ref{sec:model_comparison}).

\vspace{-0.3in}
\begin{align*}
    \log p (x) &= \log \int p (x, z) dz \\
    &= \log \int p (x, z) \frac{q (z)}{q (z)} dz \\
    &= \log\mathbb{E}_{q_{}(z)} \left[ \frac{p (x, z)}{q (z)} \right] \\
    &\geq \mathbb{E}_{q_{}(z)} \left[ \log \frac{p (x, z)}{q (z)} \right] := ELBO(x)
\end{align*}

\section{Empirical Study}
\label{sec:emp_study}

SO has supplied us with the anonymized interaction data of users on the site from  January 2017 to April 2019. The data consist of a count of actions per user per day.
We focus our analysis on the $6,916$ users who achieved the  Electorate badge, which is awarded to users who vote on 600 questions, with at least 25\% of their  total  votes cast on questions.%
    \footnote{The users' voting data is not publicly available but qualitatively similar results can be obtained on other badge types in the freely available repository of data found at \url{https://archive.org/details/stackexchange}.}
This is the same badge type that was studied by \cite{anderson2013steering}.
The observed data is the number of  actions (question votes) per user per day for $5$ weeks before and after achieving the badge, making $70$ days of interaction per person.
Voting is a particularly important action to study as there are no reputation points that are awarded for voting on SO. 
Therefore, while we can obtain similar results on other badge types with data that are freely available,%
    \footnote{The modeling and inference code, details of network structure, and the application to the freely available ``editing'' data can be found at the repository: \url{https://github.com/NickHoernle/icdm_2020}. Moreover, see Appendix~\ref{sec_appendix:civic_duty} for a reproducible analysis on freely available data.} 
we focus on the voting badge here to reduce the possibility of the presence of the reputation points system confounding our analysis.

We compare the performance of Model $2$, Model $1$, and Model $0$ as described in~Section~\ref{sec:generative_model}. 
We also include a na\"ive baseline that uses maximum likelihood assignments for the rate parameters by setting $\alpha_u$ to equal the fraction of users who were active on day $u$, and $\lambda_u$ to equal the mean of the active users' action counts for day $u$.
For all models, we report two measures of performance: the evidence lower bound (the ELBO), which is the lower-bound on the marginal log-likelihood of the data under the model assumptions~\cite{kingma2013auto, hoffman2013stochastic, rezende2015variational}; and the mean square error (MSE) of the model for reconstructing the original number of actions for each user. 
Parameter estimation is done in Pytorch and Adam is used to maximize the ELBO  with a learning rate of 0.001.
We set the dimensionality of the latent space to $m:=20$ and use planar normalizing flows with $12$ layers~\cite{rezende2015variational}.

\subsection{Model Comparison}
\label{sec:model_comparison}

\begin{figure*}
  \centering
  \includegraphics[width=.85\textwidth]{./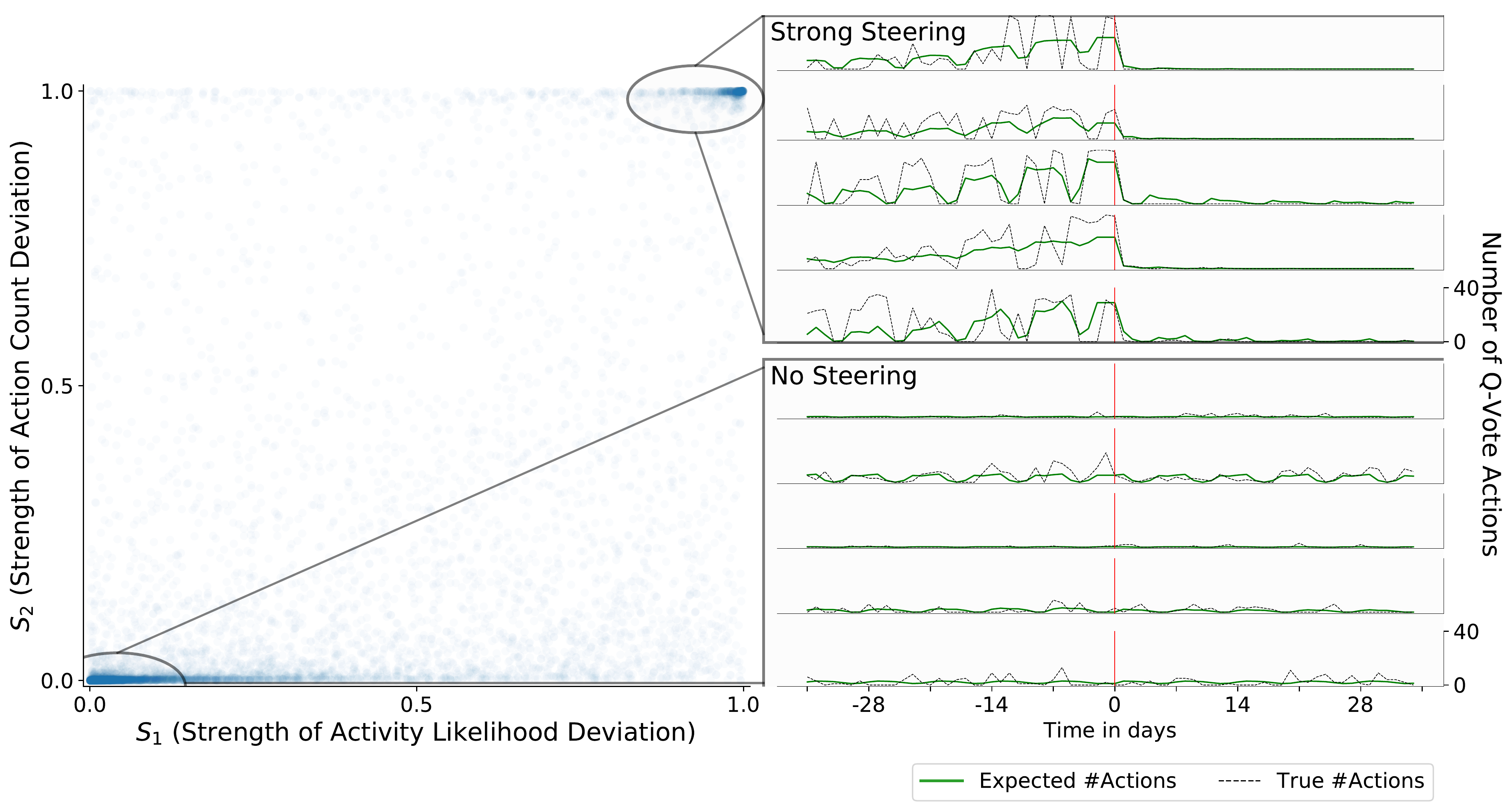}
  \caption{\textit{Left}: Scatter plot of inferred strength of steering measured by the likelihood that a user is active ($x$ axis) and the number of actions the user performs ($y$ axis). Values near $\mathbf{0}$ indicate little to no steering whereas values near $\mathbf{1}$ indicate users strongly steer toward achieving the badge. \textit{Top right}: $5$ samples of the action count trajectories from ``strong-steered'' users. The plots show the true number of question-vote actions (dashed line) alongside the expected number under the model (solid line) around day $0$ (the day of badge acquisition) marked by the red vertical line. These users dramatically increase their activity (number of actions) as they near day $0$ and sharply drop their activity after achieving the badge. \textit{Bottom Right}: $5$ samples of the action count trajectories from ``non-steered'' users. These users do not change their activity around day $0$. 
  }
  \label{fig:teaser}
\end{figure*}

We trained the models using the data of $4,149$ users (validation set of $1,386$ users,) while results are reported on a hold-out test set of $1,381$ users. 
Table~\ref{tab:elbo_results} compares the performance of the models on this test set. 
The results from Table~\ref{tab:elbo_results} show that Model $2$ outperforms the other models achieving a higher bound on the marginal log-likelihood and a lower reconstruction error on unseen data.
Moreover, the benefit of the amortized approach (which learns a complex representation of $P_u$ for each user) is demonstrated in that Models $0$, $1$ and $2$ all had a higher ELBO and consequently a lower reconstruction error than the na\"ive baseline. 
Both Model $1$ and $2$ outperform Model $0$, suggesting that the inclusion of the steering parameter $\beta$ does increase the probability of the data.

\begin{table}[b]
  \caption{ELBO (lower bound to log-likelihood) and mean squared error reconstruction (MSE).}
   \label{tab:elbo_results}
   \begin{center}
    \begin{tabular}{l|cc}
        \hline
        Model & ELBO &  MSE\\
        \hline
        Model 2 (w/ $S_u$)          & $\mathbf{-79.3}$  & $\mathbf{0.126}$   \\
        Model 1 (w/o $S_u$)         & $-104.9$           & 0.174             \\
        Model 0 (Baseline)          & $-116.1$           &  0.179             \\
        Na\"ive Baseline            & $-460.7$           & 49.33             \\
        \hline
    \end{tabular}
   \end{center}
\end{table}

Fig.~\ref{fig:teaser} (left) presents a scatter plot of the magnitude of the inferred $S_{u,1}$ and $S_{u,2}$ parameters. 
Each point in the scatter plot corresponds to a user. 
The scatter plot shows two clear modes.
One mode is in the top right corner of the plot which corresponds to users with $(S_{u,1}, S_{u,2}) \approx (1,1)$. 
We refer to the roughly $20\%$ of users in this mode as the ``strong-steerers'' because they adhere strictly to the $\beta$ deviation. 
The second, and larger, mode is in the bottom left of the plot which corresponds to $(S_{u,1}, S_{u,2}) \approx (0,0)$. 
We refer to the roughly $40\%$ of users in this mode as the ``non-steerers'' as they appear to eschew the $\beta$ deviation. 

We show   $5$ samples from the strong-steered population on the top-right hand corner of  Fig.~\ref{fig:teaser}.
The plots show the true count of actions as a function of time alongside the expected number of actions under the assumptions of Model $2$.
The red vertical line, on day $0$, corresponds to the day that the user achieved the Electorate badge.
It is important to note the high number of actions (both expected and true) before day $0$ when the badge was achieved.
After day $0$, both the true and expected numbers of actions drops dramatically.

In contrast to these strong-steered users, the bottom right of Fig.~\ref{fig:teaser} presents $5$ samples from the non-steered population. 
The counts of actions appear to show no change around day $0$.
{These users appear not to change their behavior in the presence of the badge}.

\subsection{Analysis of Steering}

The form of the inferred parameter $\beta$ shows the  effect of steering on users over time.
The plot of $\beta$ as a function of time is presented in Fig.~\ref{fig:steer_beta_response}. 
The magnitude of the values of $\beta$  indicate direct changes to the probability that the user is active, as well as expected changes in the number of actions on a given day. 
In accordance with related work, users increase their rate of activity as they approach the day upon which they achieved the badge~\cite{anderson2013steering,bornfeld2017gamifying,mutter2014behavioral}. 
At its peak, $\beta_2$ suggests that the strong-steerers will increase the number of actions (over their preferred distribution) by approximately $15$ question-votes. 

A novel insight of our model is that $\beta$ decreases below $0$ after the badge has been achieved. That is, users may decrease their activity beyond their preferred distribution after they have achieved the badge. 
This result suggests that for those users who are steered strongly, they may stop contributing altogether once the badge has been achieved.

\begin{figure}[t]
  \centering
  \includegraphics[width=.9\linewidth]{./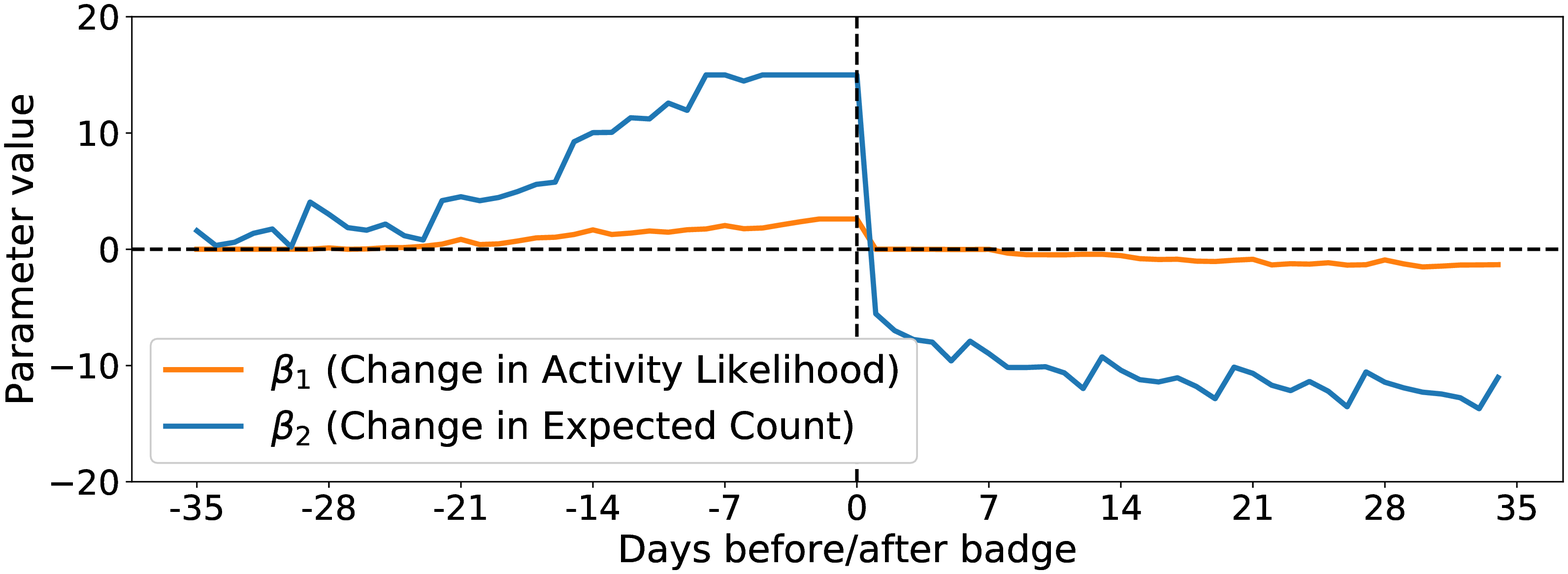}
  \caption{Expected deviation from a user's preferred distribution $P_u$.}
  \label{fig:steer_beta_response}
\end{figure}

Fig.~\ref{fig:steer_mean_response} presents the mean number of interactions per user as a function of the number of days until/after the badge is achieved. 
The three lines correspond to the three groups from Fig.~\ref{fig:teaser}: non-steerers, strong-steerer and the other users who are in neither of the two modes. 
We choose a deliberately low and generous cutoff $(S_{u,1}, S_{u,2}) > (.3,.3)$ to define a user as a member of the strong-steerer group and choose very conservative values $(S_{u,1}, S_{u,2}) < (.2,.1)$ to be considered as non-steerers. 
Thus it is possible that the true number of non-steerer users in the dataset might then be higher than the quoted $41.8\%$ in this analysis.
The strong-steerers make up $20.9\%$ of users and experience steering as is described by previous work --- they increase their rate of work dramatically before the badge is achieved.
Notice that the mean interaction count from these users' drops passed the other groups to close to $0$ after they achieve the badge.

\begin{figure}[t]
    \centering
    
		\centering
		\includegraphics[width=.9\linewidth]{./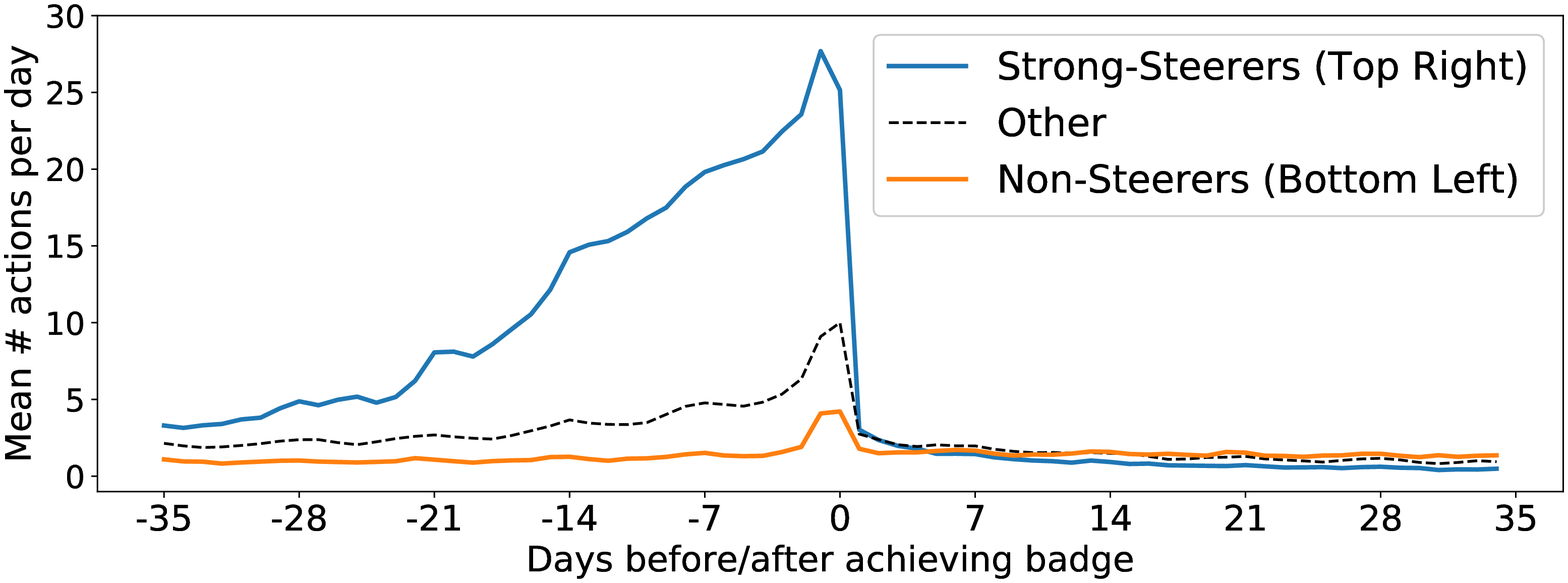}
	

  \caption{Mean number of actions per day for users who are grouped by the magnitude of their steering strength parameters ($S_u$)}
  \label{fig:steer_mean_response}
\end{figure}

The non-steered population (41.8\%) show no change in their interaction rates before or after the receipt of the badge. 
There is a distinct uptick in the mean number of question-vote actions on the day before and on the day of the badge achievement (Fig.~\ref{fig:steer_mean_response}, orange line). 
It is possible that this ``bump'' might mistakenly be seen as the response of the users to the badge incentive.
In fact, this bump is an artifact of the analysis technique which centers trajectories around a threshold that is crossed by the cumulative sum of the trajectory entries (see Section~\ref{sec:studying_mean_action_count} and Appendix~\ref{sec:bump_proof} for a discussion and proof of this claim). 

Table~\ref{tab:counts_per_group} presents the sizes of these three groups (when considering the entire data set). We highlight the fact that the non-steerer population is twice as large as the strong-steerer population and while the strong-steerer population is the minority, it is the highly engaged interaction activity from these users that may have led to some previous conclusions about steering.

\begin{table}
  \caption{Number of users where the latent parameter $S$ is used to threshold the amount of steering.}
  \label{tab:counts_per_group}
  \begin{center}
    \begin{tabular}{ll|c}
        \hline
        Group & ($S_1 ; S_2$) & \# Users\\
        \hline
        Non-steerer       & $(S_1,S_2) < (0.2,0.1)$   & $2890(41.79\%)$  \\
        Other             & Other                       & $2579(37.29\%)$  \\
        Strong-steerer    & $(S_1,S_2) > (0.3,0.3)$   & $1447(20.92\%)$  \\
       \hline
    \end{tabular}
   \end{center}
\end{table}



\section{Proving the Existence of Phantom Steering}
\label{sec:studying_mean_action_count}
The population of non-steerers in Fig.~\ref{fig:steer_mean_response} displays a sharp uptick in the mean of their action counts on the day before and on the day of the badge achievement. 
We prove that such a bump arises as an artifact of centering the data on day $0$, and it is therefore expected to arise even in the absence of a steering effect.
We show this ``phantom steering'' bump occurs in the setting of Model $0$ (Fig.~\ref{fig:graphical_model}) where daily action counts are independent draws from some unchanging latent distribution.
Our proof (and the intuition arising from it) suggests that a similar bump arises in the presence of steering as well.
It is possible that this bump may have served to inflate previous conclusions about how users change their behavior when working to achieve badges~\cite{anderson2013steering,yanovsky2019one,mutter2014behavioral}. 

For users acting under Model $0$ we present Theorem~\ref{thm:bumptheorem}, which implies that for sufficiently large badge thresholds the expected number of actions on day $0$ (the day of badge achievement) is greater than the expected number of actions on any other day.

We introduce this theorem via the following intuitive example:
Suppose that the badge threshold $N$ is chosen randomly from some large range $N \in [m,M]$ of possible action counts.
Let $S_n$ be the cumulative number of actions from a user up to (and including) day $n$.
As long as the user continues to act on the platform, $S_n$ will eventually traverse the interval $[m,M]$.
Moreover, as the count of actions on any day $n$ is a random variable (drawn from the user's preferred distribution), $S_n$ is more likely to cross the threshold $N$ on a day on which the user makes relatively more contributions. 
This claim holds even when actions are drawn under the no-steering assumptions of Model $0$ which assumes that users' action counts on each day are independent draws from their preferred distribution $P_u$ (which is not influenced by steering).

We formalize this intuition in Theorem~\ref{thm:bumptheorem}, the proof of which appears in Appendix~\ref{sec:bump_proof}. 
Recall that the random variable $A_u^0$ describes the number of actions that user $u$ performs on the day that they receive the badge.
Denote the number of actions required to achieve the badge by $N$, and let $A_{u,N}^0$ denote this random variable when the badge threshold is $N$ actions and user $u$ acts according to Model $0$. 
\begin{theorem} \label{thm:bumptheorem}
	If $P_u$ is bounded then:
	\[
		\lim_{N \rightarrow \infty}	\mathbb{E}[A_{u,N}^0] = \mathbb{E}[P_u] + \frac{Var[P_u]}{\mathbb{E}[P_u]}.
	\]
\end{theorem}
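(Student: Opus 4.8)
The plan is to recognize Theorem~\ref{thm:bumptheorem} as an instance of the classical inspection paradox (equivalently, length-biased sampling) from renewal theory. I would write the daily action counts as i.i.d.\ random variables $X_1, X_2, \dots \sim P_u$, let $S_n = \sum_{i=1}^n X_i$ be the cumulative count, and let $\tau_N = \min\{n : S_n \geq N\}$ be the (random) day on which the badge is achieved, so that $A_{u,N}^0 = X_{\tau_N}$ is exactly the increment that straddles level $N$. The intuition already given---that larger increments are disproportionately likely to contain the threshold---is precisely the assertion that $X_{\tau_N}$ converges in distribution to the \emph{length-biased} version of $P_u$. Since the length-biased mean is $\mathbb{E}[P_u^2]/\mathbb{E}[P_u]$, and $\mathbb{E}[P_u^2] = \mathrm{Var}[P_u] + \mathbb{E}[P_u]^2$, this limit equals $\mathbb{E}[P_u] + \mathrm{Var}[P_u]/\mathbb{E}[P_u]$, which is the claimed value.

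First I would compute the law of $X_{\tau_N}$ exactly. Decomposing on the crossing day $n$, the event $\{\tau_N = n,\ X_n = k\}$ occurs if and only if $X_n = k$ and $S_{n-1} \in \{N-k, \dots, N-1\}$; using that $X_n$ is independent of $S_{n-1}$ and summing over $n$ (writing $X$ for a generic copy $X \sim P_u$) gives
\[
\Pr[X_{\tau_N} = k] = \Pr[X = k] \sum_{j=N-k}^{N-1} u(j), \qquad u(j) := \sum_{m \geq 0} \Pr[S_m = j],
\]
where $u$ is the renewal (Green's) function counting the expected number of visits of the walk to level $j$. The window $\{N-k, \dots, N-1\}$ contains exactly $k$ terms, so the second step is to invoke the discrete renewal theorem, $u(j) \to 1/\mathbb{E}[P_u]$ as $j \to \infty$, which forces $\sum_{j=N-k}^{N-1} u(j) \to k/\mathbb{E}[P_u]$ and hence $\Pr[X_{\tau_N} = k] \to k\,\Pr[X=k]/\mathbb{E}[P_u]$---the length-biased law. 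Multiplying by $k$ and summing over the support then yields $\lim_{N} \mathbb{E}[X_{\tau_N}] = \sum_k k^2 \Pr[X=k]/\mathbb{E}[P_u] = \mathbb{E}[P_u^2]/\mathbb{E}[P_u]$, completing the argument.

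The boundedness hypothesis does most of the technical work cheaply: it makes $P_u$ finitely supported, so $\mathbb{E}[P_u^2] < \infty$ automatically and the interchange of the limit with the (now finite) sum over $k$ is immediate. Two points still require care. Days with zero actions can never be crossing days, since $X_n = 0$ forces $S_n = S_{n-1}$ and thus $\tau_N \neq n$; this is consistent with the formula above, whose window is empty when $k=0$, and if desired one can eliminate such days at the outset by passing to the subsequence of positive-count days, whose increments share the same ratio $\mathbb{E}[X^2]/\mathbb{E}[X]$. The genuine obstacle is the renewal-theorem step itself: the pointwise limit $u(j) \to 1/\mathbb{E}[P_u]$ holds only when the increments are aperiodic, and for a periodic $P_u$ the window sum converges merely in an averaged sense. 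I expect to resolve this either by assuming aperiodicity or, following the randomized-threshold intuition already described, by averaging $N$ over a widening range $[m,M]$; the latter replaces the delicate pointwise limit with the robust Cesàro statement $\tfrac{1}{L}\sum_{j=0}^{L} u(j) \to 1/\mathbb{E}[P_u]$ and thereby sidesteps the periodicity issue entirely.
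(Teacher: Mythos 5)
Your proposal is correct and follows essentially the same route as the paper: both decompose the law of the straddling increment over the crossing day to write $\Pr[Y_N=k]$ as $\Pr[X=k]$ times a window sum of the renewal function, then apply the discrete renewal theorem to obtain the length-biased limit $\mathbb{E}[P_u^2]/\mathbb{E}[P_u]$. The only substantive difference is that the paper proves the renewal-theorem step itself (via a Perron--Frobenius argument on an epoch-transition matrix, disposing of the periodicity issue you flag by rescaling by $g=\gcd(\mathrm{range}(P_u))$), whereas you invoke it as known and leave periodicity as the remaining technicality.
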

\noindent This expected bump size holds in the limit as the badge threshold becomes large with respect to the mean of $P_u$.
For fixed $P_u$ the convergence to this limit is exponential in the threshold.

Theorem~\ref{thm:bumptheorem} applies when a user's distribution is identical for all days and their actions are drawn independently from it.
In order to distinguish between days of the week as in Section~\ref{sec:inference}, we now generalize this theorem to the setting where each user has a distinct distribution for each day of the week and they draw from these distributions independently in turn.
Let $\tau \in \{1,\ldots, 7\}$ index days of the week. 
Because the $P_u^d$ are indexed by day where $d=0$ is the day of badge achievement, the sequence $P_u^1,\ldots, P_u^\tau, \ldots, P_u^7,P_u^1,\ldots$ is the vector of a user's preferred distribution for the entire period under study.%
    \footnote{Note that days are indexed arbitrarily for different users indicating that they may start their interaction, and achieve the badge, on any random day.}
We generalize Theorem~\ref{thm:bumptheorem} to the following result, the proof of which is also relegated to Appendix~\ref{sec:bump_proof}:

\begin{theorem} \label{thm:weeklybumptheorem}
	If each of the distributions $P_u^1 \ldots, P_u^\tau, \ldots, P_u^T$ is finite and nonzero (and nonnegative and integer valued), then
	\begin{equation*}
	\lim_{N\rightarrow\infty} \E[A_{u,N}^0] = \frac{\sum_{\tau=1}^T \E[(P_u^\tau)^2]}{\sum_{\tau=1}^T \E[P_u^\tau]}.
	\label{eq:expected_bump_size}
	\end{equation*}
\end{theorem}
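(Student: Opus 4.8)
The plan is to recognize this as a periodic (cyclostationary) version of the classical inspection paradox from renewal theory. I model the daily action counts as independent integer-valued random variables $X_1, X_2, \ldots$, where $X_n$ is drawn from $P_u^{\tau(n)}$ and $\tau(n) := ((n-1)\bmod T)+1$ is the day-of-week index; I set $S_0 = 0$, $S_n = \sum_{i=1}^n X_i$, and let $D := \min\{n : S_n \ge N\}$ be the day on which the cumulative count first reaches the badge threshold $N$, so that $A_{u,N}^0 = X_D$. In the i.i.d.\ case ($T=1$) the inspection paradox gives the size-biased mean $\mathbb{E}[P_u^2]/\mathbb{E}[P_u]$, which coincides with the limit of Theorem~\ref{thm:bumptheorem} since $\mathbb{E}[P_u] + \mathrm{Var}[P_u]/\mathbb{E}[P_u] = \mathbb{E}[P_u^2]/\mathbb{E}[P_u]$; the new content is that the periodic structure replaces $\mathbb{E}[P_u]$ by the per-cycle mean $\mu_{\mathrm{cyc}} := \sum_{\tau=1}^T \mathbb{E}[P_u^\tau]$ in the denominator, while the size-biasing numerators are summed day-type by day-type.

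First I would write the target as a sum over candidate badge days. Day $n$ is the achievement day exactly when $S_{n-1} < N \le S_n$, so
\begin{equation*}
\mathbb{E}[A_{u,N}^0] = \sum_{n=1}^{\infty} \mathbb{E}\!\left[X_n\,\mathbf{1}\{S_{n-1} < N \le S_{n-1}+X_n\}\right].
\end{equation*}
Because $X_n$ is independent of $S_{n-1}$, conditioning on $X_n = k$ turns the event into $S_{n-1} \in \{N-k,\ldots,N-1\}$, a window of $k$ consecutive integers. Grouping the days $n$ by their residue $\tau = \tau(n)$ then yields
\begin{equation*}
\mathbb{E}[A_{u,N}^0] = \sum_{\tau=1}^{T} \sum_{k} k\,\Pr[P_u^\tau = k] \sum_{n \equiv \tau} \Pr\!\left[S_{n-1} \in [N-k,\,N-1]\right],
\end{equation*}
where the innermost sum counts the expected number of partial-sum epochs $S_{n-1}$ of a fixed residue class that land in a length-$k$ window just below $N$.

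The key step is to evaluate that inner count asymptotically. For a fixed residue class, the corresponding epochs $\{S_{r}, S_{r+T}, S_{r+2T}, \ldots\}$ form a (delayed) lattice renewal process whose successive gaps are each the sum of one full cycle of day-types, hence i.i.d.\ with mean $\mu_{\mathrm{cyc}}$ independent of the residue. Blackwell's renewal theorem (lattice form) then gives that the expected number of such epochs in a fixed length-$k$ window receding to infinity converges to $k/\mu_{\mathrm{cyc}}$, the same density for every residue class. Substituting and using $\sum_k k^2 \Pr[P_u^\tau = k] = \mathbb{E}[(P_u^\tau)^2]$ gives
\begin{equation*}
\lim_{N\to\infty}\mathbb{E}[A_{u,N}^0] = \sum_{\tau=1}^{T} \frac{\mathbb{E}[(P_u^\tau)^2]}{\mu_{\mathrm{cyc}}} = \frac{\sum_{\tau=1}^{T}\mathbb{E}[(P_u^\tau)^2]}{\sum_{\tau=1}^{T}\mathbb{E}[P_u^\tau]},
\end{equation*}
as claimed; finiteness of each $P_u^\tau$ makes every $k$-sum finite and $\mathbb{E}[(P_u^\tau)^2]$ finite, while nonzeroness guarantees $\mu_{\mathrm{cyc}} > 0$, so the cumulative process reaches every threshold almost surely.

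The main obstacle is making the limit–sum interchange rigorous and pinning down the renewal density in this marked setting. Since the supports are bounded the sum over $k$ is finite and causes no trouble; the real work is justifying that the limit may be taken inside the infinite sum over $n$ — which is precisely what the renewal theorem packages — and verifying its hypotheses in the lattice, delayed, and periodically-marked regime. In particular one must handle the lattice span: if the per-cycle sum has span $d>1$ the window count depends on $N \bmod d$, so either an aperiodicity (span $1$) assumption is invoked or one averages over the residues of $N$, after which the computation above is unchanged. I would also verify uniform integrability (dominated convergence) to control the tail of the $n$-sum before passing to the limit, which is where boundedness of $P_u$ is used decisively.
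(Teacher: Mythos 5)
Your proposal is correct and follows essentially the same route as the paper's proof: the same decomposition of $\E[A_{u,N}^0]$ over candidate achievement days grouped by residue class modulo $T$, the same conditioning on $X_n=k$ to produce a length-$k$ window below $N$, and the same key fact that the expected number of visits of each residue class's partial sums to a fixed point near $N$ tends to $1/\sum_\tau \E[P_u^\tau]$. The only real difference is that you invoke Blackwell's lattice renewal theorem for that last step, whereas the paper proves it from scratch as Lemma~\ref{asymptoticallyuniform}, via a recurrence for the visit probabilities $p_m$, the Perron--Frobenius theorem applied to a primitive stochastic matrix, and a CLT argument to identify the limiting constant as $1/\mu$; citing the standard theorem is shorter, while the paper's self-contained lemma also yields the exponential rate of convergence it advertises. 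Your closing caveat about the lattice span is well taken and is in fact a subtlety the paper's own reduction does not fully dispose of: normalizing the gcd of the union of the supports to $1$ does not prevent the per-cycle sum $Z=\sum_\tau X^\tau$ from having span greater than $1$ (e.g.\ $X^1\equiv 1$, $X^2\equiv 2$ gives $Z\equiv 3$ and $\E[Y_N]$ oscillating with $N \bmod 3$ around the claimed value $5/3$), so an aperiodicity hypothesis or an averaging over residues of $N$, as you suggest, is genuinely needed for the limit to exist in the stated form.
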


\section{User Survey}
\label{sec:empirical_survey}
As an additional form of validation for the analytical results that are presented in this paper, we hosted a survey that recruited participants from SO to answer questions relating to their motivations for contributing to the website. 
A clickable advertisement was placed on SO and willing participants were directed to the survey. 
We paid each survey participant $\$10$ for completing the survey. In total, we received $70$ responses from the community. 

Fig.~\ref{fig:survey_responses_motivation} summarizes the responses to the question: 
``What are your reasons for participating in SO?" The majority of users claimed to have personal and/or altruistic reasons for contribution to the website with $87.1\%$ claiming to contribute to increase their own knowledge (and $68.6\%$ claiming to want to ``contribute to the community''). In contrast to this, only $24.2\%$ of the users selected the reason to ``achieve badges''. $50\%$ of users claimed that they had a goal of increasing their reputation score.

\begin{figure}[ht]
  \centering
  \includegraphics[width=.85\linewidth]{./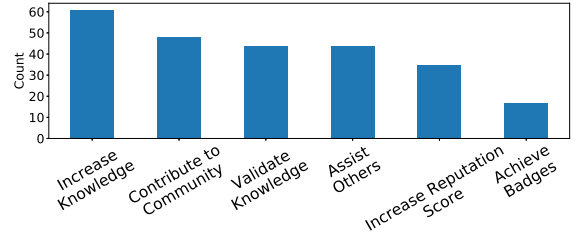}
  \caption{Counts of responses to the reasons for contributing to Stack Overflow.}
  \label{fig:survey_responses_motivation}
\end{figure}

We also asked the users specifically about their voting contributions: ``What motivates you to vote on other people's posts?''
The responses to this question are summarised in Fig.~\ref{fig:survey_responses_voting}. 
Participants could select any combination of three different reasons for voting: 
badge acquisition (``I wished to achieve one of the voting badges: e.g., Supporter, Critic, Suffrage, Vox Populi, Civic Duty or Electorate''); altruism (``I think it is important to provide feedback about other's work''), or ``other". 
Only $12.9\%$ of participants who engage in voting actions   reported badge acquisition is a motivating factor for their work. (Eight of the participants in the study claimed to not engage in voting actions and were not counted.)
\begin{figure}[ht]
  \centering
  \includegraphics[width=.85\linewidth]{./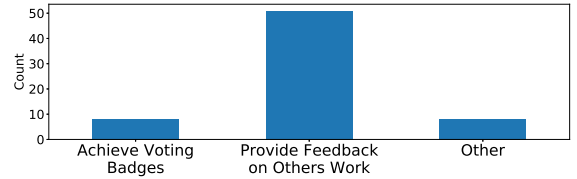}
  \caption{Counts of responses to the reasons for voting in Stack Overflow.}
  \label{fig:survey_responses_voting}
\end{figure}

Together these results present further evidence to corroborate the model predictions that only a minority of the SO participants are indeed steered by badges.

\section{Limitations}
\label{sec:limitations}

The empirical study in Section~\ref{sec:emp_study} has a number of limitations which we list here. 
However, we note that most of these limitations can be addressed in future work which is discussed in Section~\ref{sec:conclusions_and_future_work}.


First, our empirical analysis was limited to $6,916$ SO users who achieved the Electorate badge from January 2017 to April 2019.
The Electorate badge is the highest badge category for vote actions and while the number of users who achieve this badge is a small fraction of all users who vote on the platform ($0.18\%$), these users account for $38.4\%$ of all votes that have been performed on SO.
In Appendix~\ref{sec_appendix:civic_duty} we have included a qualitatively similar study on the larger population of $25,314$ users who achieved the Civic Duty badge. The users who achieve this badge represent $0.64\%$ of all users who vote on SO and they account for $32.8\%$ of all votes cast. The last group ($99.2\%$ of all users who vote) accounts for only $28.8\%$ of all votes cast and have achieved neither the Civic Duty nor the Electorate badge.
We also motivate that studying the voting actions helps to provide insight into badge effect on user behaviour as voting actions do not receive reputation points on SO and therefore they do not have this as a possible confounding variable (as would be present for the editing badges). 
See Appendix~\ref{sec_appendix:civic_duty} for an extended analysis from the freely available editing data. 

A second limitation that is related to the first is our focus on only one action type, that of voting-actions. 
We chose to do this in order to study the effect of the badge on the action that it directly incentivizes, which is presumably associated with the largest response to the badge ($\beta$ in our model).
This is a limitation as previous work has raised the possibility that a badge of one type (e.g., incentivizing voting actions) can have effects on actions of another type (e.g., editing or reviewing actions)~\cite{anderson2013steering,mutter2014behavioral}. 
We note this is a limitation of our empirical study and not a limitation of the models that have been presented as the models can extend easily to this new setting where many actions are included in a new likelihood model.

A third limitation is that we explicitly assume a single parametric form for the steering parameter $\beta$.
This assumption is that all steered users deviate from their preferred distribution in the same way, and that users only differ in the strength parameter $S$.
However, it is clear that a significant portion of the population ($\pm 40\%$) are neither steered nor non-steered (they are not in the two modes that are described in Fig.~\ref{fig:teaser}) leading to the conclusion that this simplifying assumption is inadequate to capture the population of users at large. 


\section{Conclusion and Future work}
\label{sec:conclusions_and_future_work}

We have presented a novel probabilistic model that describes how users interact on the SO platform and in particular how these users respond to badge incentives on the website. 
We demonstrated how this model can be fit to the data that is provided by SO and we investigated the distribution that is learnt over the latent space that describes the ``steering effect''. 

Our results  provide a more informed  understanding of how users respond to badges in online communities. 
First, that some users do exhibit steering supports the claims made by previous work~\cite{anderson2013steering,mutter2014behavioral,yanovsky2019one,li2012quantifying}.
However, approximately 40\% of the population do not exhibit steering. 
These users do not change the rate of their activity for the $10$ weeks under study. Rather, they continue to act with the same rate well after the badge has been achieved. 
This suggests that these users have reasons for performing voting actions on SO which do not include the desire to obtain the Electorate badge.

Second, the 20\% of the population identified as strong-steerers significantly decrease their level of contributions after day zero, beyond what was previously reported. 
It is possible that assigning additional badges, with thresholds beyond those already in place in SO will continue to motivate such users.

Third, any analysis of badge behavior must take into account the presence of the phantom steering bump which has not previously been acknowledged in the context of badges. 
This statistical artifact is model independent and  may lead to inflated conclusions about the effect of badges on users' behavior.

Future work will extend the models of Section~\ref{sec:modeling_user_contrbutions} to allow for different responses $\beta$ to a badge. 
We will update the model to include \textit{multiple $\beta$} parameters to capture different responses to badges across users. 
It is possible that the roughly $40\%$ of users who are not well described by Model $2$ have responses that are not described by a single  $\beta$ parameter and allowing for this more intricate model will allow it to capture more users into some behavioral archetype.
Moreover, we wish to study the {indirect} effect of badges of a certain type on actions that are not directly incentivized by this badge, as stated by \cite{anderson2013steering} and \cite{li2012quantifying}. 
Our model can be directly applied to studying this question by modeling the likelihood of activities as a vector for each action type. 

A deeper understanding of how and why users contribute to these peer-production websites will inform the design of  more personalized and effective rewards that motivate and engage the users.
Our user study has highlighted that reputation points and altruistic factors such as ``impact'' may contribute more than badges towards motivating users on SO. 
We therefore propose to design models that assist in understanding how these different rewards can be leveraged to incentivize users on such platforms.

\section*{Acknowledgment}
The authors would like to thank Stack Overflow for providing us with the voting data that allowed for this investigation. Moreover, Stack Overflow supplied advertisement space on its website such that a link to our survey was visible to their userbase. 

Hoernle is funded by a Commonwealth Scholarship. Kehne and Procaccia were partially supported by the National Science Foundation under grants CCF-2007080, IIS-2024287 and CCF-1733556; and by the Office of Naval Research under grant N00014-20-1-2488.

\bibliographystyle{IEEEtran}
\bibliography{bibliography}

\appendix

\subsection{Omitted Proofs}
\label{sec:bump_proof}

Here we present the proof of Theorem~\ref{thm:bumptheorem}.
Let $X$ be a nonnegative, bounded, and integer-valued random variable.
Let $\{X_m\}_{m \in \N}$ be independent random variables which are distributed identically to $X$.
We will be concerned with the partial sums $S_n = \sum_{m=1}^n X_m$.
Let $Y_N$ denote the random variable which is the copy $X^m$ that brings $S_n$ across the threshold $N$; that is, for which $S_{m-1} < N$ and $S_m \geq N$. 

\begin{theorem} \label{thm:appendixsimple}
	If $X$ is nonnegative, integer-valued, and bounded then
	\[
		\lim_{N \rightarrow \infty}	\E[Y_N] = \E[X] + \frac{Var[X]}{\E[X]}
	\]
\end{theorem}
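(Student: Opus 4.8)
The plan is to recognize the right-hand side as the mean of the \emph{size-biased} distribution of $X$: since $\E[X] + Var[X]/\E[X] = \E[X^2]/\E[X]$, the claim is an instance of the classical inspection (waiting-time) paradox. Intuitively, the increment $Y_N$ that straddles the level $N$ is not a typical copy of $X$: a copy of size $k$ is $k$ times as likely to be the one that carries the running sum $S_n$ past $N$, so in the limit $Y_N$ is drawn from the tilted law $k\,\Pr[X=k]/\E[X]$, whose mean is exactly $\E[X^2]/\E[X]$. I would make this rigorous by writing $\E[Y_N]$ exactly in terms of the renewal function of the walk $(S_n)$ and then letting $N \to \infty$.

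First I would express $Y_N$ as a single sum over the crossing day. Exactly one index $n$ satisfies $S_{n-1} < N \le S_n$, so
\begin{equation*}
Y_N = \sum_{n \ge 1} X_n \, \mathbf{1}[\,S_{n-1} < N \le S_n\,].
\end{equation*}
Taking expectations, conditioning on $X_n = k$ --- which turns the crossing event into $N-k \le S_{n-1} < N$ --- using the independence of $X_n$ and $S_{n-1}$, and then summing over all $n$, gives
\begin{equation*}
\E[Y_N] = \sum_{k} k \, \Pr[X=k] \sum_{j=N-k}^{N-1} U(j),
\end{equation*}
where $U(j) = \sum_{m \ge 0} \Pr[S_m = j]$ is the renewal function, i.e.\ the expected number of visits of the walk $(S_m)$ to level $j$.

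Next I would take $N \to \infty$. By the lattice renewal theorem, $U(j) \to 1/\E[X]$ as $j \to \infty$, so for each fixed $k$ the inner window of $k$ consecutive terms converges to $k/\E[X]$. Because $X$ is bounded, the outer sum over $k$ is finite, so I may pass the limit inside term by term to obtain
\begin{equation*}
\lim_{N \to \infty} \E[Y_N] = \sum_{k} k \, \Pr[X=k] \cdot \frac{k}{\E[X]} = \frac{\E[X^2]}{\E[X]} = \E[X] + \frac{Var[X]}{\E[X]},
\end{equation*}
which is the desired identity.

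The main obstacle I anticipate is justifying $U(j) \to 1/\E[X]$ in this setting, where two subtleties arise. First, $X$ may place positive mass at $0$ (a user can be inactive on a day), so the walk can dwell at a level and $U(j)$ counts repeated visits; I would handle this by passing to the embedded pure-jump process with increments distributed as $X \mid X \ge 1$, whose renewal function obeys the standard theorem, and checking that the dwell-time bookkeeping cancels to leave the limit $1/\E[X]$ unchanged. Second, if the support of $X$ lies on a sublattice $d\mathbb{Z}$ with $d>1$, Blackwell's theorem only gives convergence along the lattice; but since the window sum $\sum_{j=N-k}^{N-1}U(j)$ averages over $k$ consecutive integers, it still tends to $k/\E[X]$, so the conclusion is unaffected. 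Boundedness of $X$ is precisely what keeps the sum over $k$ finite and makes the interchange of limit and summation immediate.
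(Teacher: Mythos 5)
Your proposal is correct and follows essentially the same route as the paper: both decompose $\E[Y_N]$ over the crossing index, use independence of the crossing increment from the preceding partial sum to obtain a size-biased sum weighted by window sums $\sum_{j=N-k}^{N-1}U(j)$ of the renewal measure, and conclude from the asymptotic uniformity $U(j) \to 1/\E[X]$. The only material difference is that you invoke the classical lattice renewal theorem (with the same bookkeeping for mass at zero and sublattice support), whereas the paper proves that convergence from scratch in Lemma~\ref{asymptoticallyuniform} via a Perron--Frobenius argument on an epoch-transition matrix together with a CLT argument to identify the limiting constant.
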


More generally, we also consider the case when the $X$ are drawn from distributions $X^1, \ldots, X^\tau, \ldots, X^T$ repeatedly in turn.
Then the partial sums are ${S_n = \sum_{m=1}^n X^{m\mod T}}$, where all copies of $X^\tau$ are independent.
Let $\xi_\tau$ denote the event that $Y_N$ is drawn from distribution $X^\tau$, and let $Z = \sum_{\tau = 1}^D X^\tau$.
For this setting we have the following theorem:
\begin{theorem} \label{thm:appendixweekly}
If each of the distributions $X^\tau$ is finite, nonzero, nonnegative, and integer valued then
\[
\lim_{N\rightarrow\infty} \E[Y_N] = \frac{\sum_\tau \E[(X^\tau)^2]}{\E[Z]}.
\]
\end{theorem}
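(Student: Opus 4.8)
The right-hand side is a \emph{size-biased} (length-biased) mean, so the plan is to identify the limiting law of the crossing increment $Y_N$ and then read off its expectation. Intuitively, a step that happens to be large ``covers'' a longer stretch of the cumulative-sum axis, and for large $N$ the threshold $N$ lands essentially uniformly over that axis; hence a step of type $\tau$ and value $v$ is chosen as the crossing step with probability proportional to $v\,\Pr[X^\tau = v]$. I would make this precise by conditioning jointly on the type $\tau$ (the event $\xi_\tau$) and the value $v$ of the crossing step, and separating the contribution of the renewal structure from that of the increment that does the crossing.

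The key identity comes from independence: the increment $X_m$ at step $m$ is independent of the history $S_{m-1}$, and a type-$\tau$ step of value $v$ is the one that crosses $N$ exactly when $S_{m-1}$ falls in the window $\{N-v,\ldots,N-1\}$. Summing over all steps of type $\tau$ gives
\[
\Pr[\xi_\tau,\, Y_N = v] = p_\tau(v)\sum_{j=N-v}^{N-1} u_\tau(j),
\]
where $p_\tau(v) = \Pr[X^\tau = v]$ and $u_\tau(j)$ is the expected number of type-$\tau$ steps $m$ with $S_{m-1} = j$. Writing a type-$\tau$ step as a whole number $k$ of completed cycles followed by the within-cycle partial sum $V_\tau = \sum_{s=1}^{\tau-1} X^s$, one has $S_{m-1} = \tilde S_k + V_\tau$ with $\tilde S_k$ a sum of $k$ independent copies of $Z$ independent of $V_\tau$, so that $u_\tau$ is the convolution of the cycle-renewal function with the law of $V_\tau$.

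I would then invoke the lattice renewal theorem for the i.i.d. cycle process (increments distributed as $Z$, mean $\E[Z]>0$): the cycle-renewal function tends to $1/\E[Z]$, and convolving against the probability law of $V_\tau$ (total mass one, bounded support) preserves this limit, so $u_\tau(j) \to 1/\E[Z]$ as $j\to\infty$. Consequently $\sum_{j=N-v}^{N-1} u_\tau(j) \to v/\E[Z]$ and $\Pr[\xi_\tau, Y_N = v] \to v\,p_\tau(v)/\E[Z]$. Since each $X^\tau$ is finite, the double sum over $(\tau,v)$ has finitely many terms and the limit exchanges with the sum, yielding
\[
\lim_{N\to\infty}\E[Y_N] = \sum_\tau \sum_v v \cdot \frac{v\,p_\tau(v)}{\E[Z]} = \frac{\sum_\tau \E[(X^\tau)^2]}{\E[Z]}.
\]

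The main obstacle is the renewal-theorem step, and specifically the periodicity/span issue intrinsic to integer-valued (lattice) renewals: if $Z$ is supported on a proper sublattice, $u_\tau(j)$ oscillates and the pointwise limit can fail. I would handle this either by assuming span one (aperiodicity), the generic case, or by appealing to the periodic form of the lattice renewal theorem together with the smoothing from the convolution against $V_\tau$ and the summation over the width-$v$ window. Finally, specializing to $T=1$ recovers Theorem~\ref{thm:appendixsimple}, since $\E[X^2]/\E[X] = \E[X] + Var[X]/\E[X]$, confirming the two results are consistent.
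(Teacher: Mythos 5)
Your proposal is correct and follows essentially the same route as the paper's proof: the same decomposition $\Pr[\xi_\tau,\,Y_N=v]=\Pr[X^\tau=v]\sum_{j=N-v}^{N-1}u_\tau(j)$ via independence of the crossing increment from the preceding partial sum, followed by the same key fact that the expected visit density of the cycle sums tends to $1/\E[Z]$, and the same size-biased summation at the end. The only differences are that the paper proves the renewal-density limit itself (Lemma~\ref{asymptoticallyuniform}, via a Perron--Frobenius argument on an epoch-transition matrix) rather than citing the lattice renewal theorem, and it disposes of the periodicity issue cleanly at the outset by rescaling all the $X^\tau$ by $g=\gcd$ of their supports and noting $Y_{gN}=g\hat Y_N$, which is tidier than your proposed alternatives for the non-aperiodic case.
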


Theorem~\ref{thm:appendixsimple} follows directly from Theorem~\ref{thm:appendixweekly} by taking the $X^\tau$ to be identically distributed. It therefore suffices to prove Theorem~\ref{thm:appendixweekly}.

We begin by showing that the likelihood of the sequence $\{S_n\}$ visiting any given number $N$ is asymptotically uniform. 
Let 
$p_m  := \E\left[ \left|\left\{ n \in \N: S_n = m \right\} \right| \right]$, 
$g := \gcd(range(X))$
and observe that if $X > 0$ then $p_m = \Pr[ m \in \{S_n\}]$.
Also, if $m \not \in g\N$ then clearly $p_m = 0$.
For the $p_m$ for which $m \in g \N$, we have the following lemma:
\begin{lemma} \label{asymptoticallyuniform}
	If $X$ is nonzero, nonnegative, and bounded then
	\begin{equation*}
		\lim_{n \rightarrow \infty} p_{gn} = \frac{g}{\mu}
	\end{equation*}
\end{lemma}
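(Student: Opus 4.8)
The plan is to recognize Lemma~\ref{asymptoticallyuniform} as the lattice (arithmetic) renewal theorem and to prove it by singularity analysis of a probability generating function, exploiting the boundedness of $X$ to reduce the whole question to a rational function. Let $\phi(s) := \E[s^X] = \sum_{k} \Pr[X=k]\, s^k$ be the pgf of $X$; since $X$ is bounded, $\phi$ is a polynomial. First I would record that the ordinary generating function of the visit counts factors through $\phi$: because $p_m = \E[\,|\{n:S_n=m\}|\,] = \sum_{n\geq 1}\Pr[S_n=m] = \sum_{n\geq 1}[s^m]\phi(s)^n$, summing the geometric series (legitimate for $|s|<1$, where $|\phi(s)|\leq\phi(|s|)<1$ using that $X$ is nonzero, so $\Pr[X=0]<1$) gives $P(s) := \sum_m p_m s^m = \phi(s)/(1-\phi(s))$. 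Thus the asymptotics of $p_m$ are dictated entirely by the singularities of $P$.

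Second I would reduce to the aperiodic (span-one) case. Writing $\mu := \E[X]$ and $g$ for the gcd of the range of $X$, every increment lies in $g\N$, hence every $S_n$ does too, so $p_m=0$ for $m \notin g\N$ and it suffices to understand $p_{gn}$. Factoring $\phi(s) = \psi(s^g)$, where $\psi$ is the pgf of $X/g$, and substituting $t=s^g$ turns the claim into $\lim_{k} q_k = g/\mu$, where $\sum_k q_k t^k = Q(t) := \psi(t)/(1-\psi(t))$ and $q_k = p_{gk}$. Here $\psi$ is a polynomial with $\psi(1)=1$, $\psi'(1)=\mu/g$, and span one.

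Third I would extract the coefficient asymptotics. Since $\psi$ is a polynomial, $Q$ is rational, so its coefficients are governed by a partial-fraction expansion over the roots of $1-\psi(t)$. At $t=1$ we have $1-\psi(t) = (\mu/g)(1-t)+O((1-t)^2)$ with $\psi'(1)=\mu/g\neq 0$, so $t=1$ is a simple root and contributes the term $\tfrac{g/\mu}{1-t}$, whose coefficients are the constant $g/\mu$. Every remaining root $t_0$ satisfies $|t_0|>1$, so its partial-fraction contribution decays geometrically in $k$; summing, $q_k = g/\mu + O(\rho^{-k})$ for some $\rho>1$, which yields the limit and simultaneously explains the exponential rate of convergence quoted after Theorem~\ref{thm:bumptheorem}.

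The main obstacle is the claim that $t=1$ is the \emph{only} root of $1-\psi(t)$ on the unit circle, i.e. that $|\psi(t_0)|<1$ for every $|t_0|=1$ with $t_0\neq 1$; this is exactly where the span-one hypothesis (equivalently, the definition of $g$) is indispensable. I would argue that $|\psi(t_0)| = |\E[t_0^{X/g}]| = 1$ forces, by the equality case of the triangle inequality, the random phases $t_0^{X/g}$ to be almost surely aligned, so $t_0^{\,j-j'}=1$ for all pairs $j,j'$ in the support of $X/g$; since these differences generate $\Z$ (the support of $X/g$ has gcd one), $t_0$ must be a root of unity equal to $1$. With this established, all other roots lie strictly outside the closed unit disk and the partial-fraction estimate of the previous step goes through.
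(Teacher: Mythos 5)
Your proof is correct in substance but follows a genuinely different route from the paper's. The paper reduces to $g=1$ and $X>0$ (absorbing the $\Pr[X=0]$ case via an expected-lingering-time factor), writes the renewal recurrence $p_m=\sum_j p_{m-j}\Pr[X=j]$, blocks the indices into epochs of length $M$ to obtain a primitive right-stochastic matrix, invokes Perron--Frobenius for convergence $p_m\to\gamma$, and then needs a \emph{separate} counting/CLT argument to identify $\gamma=1/\mu$. Your generating-function argument --- $P(s)=\phi(s)/(1-\phi(s))$ rational because $X$ is bounded, partial fractions, residue at $s=1$ --- is the classical analytic proof of the lattice renewal theorem for bounded increments; it buys you the constant $g/\mu$ directly from the residue $1/\psi'(1)$ with no second identification step, it handles $\Pr[X=0]>0$ without a preliminary reduction, and it makes the exponential convergence rate transparent (the next-smallest modulus among the roots of $1-\psi$), which is the rate the paper asserts after Theorem~\ref{thm:bumptheorem}. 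The two aperiodicity arguments are really the same fact in different clothing: primitivity of the paper's epoch matrix corresponds to $t=1$ being the unique root of $1-\psi$ on the closed unit disk. One small repair is needed in your last step: it is \emph{not} true in general that $|\psi(t_0)|<1$ for all $t_0\neq 1$ on the unit circle (e.g.\ support $\{2,5\}$ has gcd $1$ but $|\psi(e^{2\pi i/3})|=1$), and the pairwise differences of the support need not generate $\Z$ even when the support has gcd $1$ (support $\{1\}$, or again $\{2,5\}$). What you actually need, and what is true, is that $1-\psi(t_0)=0$ with $|t_0|=1$ forces the convex combination $\sum_j\Pr[X/g=j]\,t_0^j$ of unit vectors to equal $1$, hence $t_0^j=1$ for every $j$ in the support, hence $t_0=1$ because the support itself (not its difference set) has gcd one. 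With that one-line fix the partial-fraction estimate and the whole argument go through.
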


\begin{proof} 
    First, it suffices to assume that $g = 1$.
	This is because the integer-valued random variable $X' := X/g$ has mean $ \mu / g$ and $\gcd(range(X')) = 1$, and proving the claim for $X'$ implies the claim for $X$. 
	It also suffices to assume that $X > 0$. 
	This is because the sequence $\{S_n\}_{n\in\N}$ remains at a specific value $m$ only so long as the independent draws are $X_n = 0$, after which it leaves $m$ forever.
	The expected number of steps that $\{S_n\}$ lingers at $m$ for is exactly $\frac{1}{1-\alpha}$, where $\alpha = \Pr[X = 0]$. 
	Since $\mu > 0$ by assumption, we may prove the claim for $X'' := X \vert X>0$. Then $\mu = \frac{\mu''}{1-\alpha}$ and 
	\begin{equation*}
		p_m = \E\left[ \left|\left\{ n \in \N: S_n = m \right\} \right| \right]  = \frac{1}{1-\alpha} p_m''
	\end{equation*}
	Thus proving the claim for $X''$, proves the claim for $X$. Therefore, we may assume without loss of generality that $X > 0$ and that $\gcd(range(X)) = 1$. 
	
	Let ${M := \max\{range(X)\}}$ be the maximum value that $X$ obtains.
	Then the $p_m$ obey the recurrence 
	\begin{equation} \label{originalrecurrence}
		p_m = \sum_{j=1}^M p_{m-j} \Pr[X = j]
	\end{equation}
	with the initial conditions $p_0 = 1$ and $p_m = 0$ for all $m < 0$.
	Because $X$ is bounded by $M$, we may break $\N$ up into ``epochs'' ${\{1, \ldots, M\}, \{M+1, \ldots, 2M\}, \ldots}$, and then define ${q^k_r := p_{kM + r}}$ with ${q^0 := (0, \ldots, 0, 1)^T}$.
	For any $m = kM + r$ we can then iteratively expand the $p_{m-j}$ terms in Equation~\ref{originalrecurrence} for which $m-j \geq k M$ until the expression for each $p_m$ depends only on the previous epoch, which gives an alternative recurrence of the form
	\begin{equation} \label{newrecurrence}
		p_{kM + r} = \sum_{s=1}^M \alpha^r_s \: p_{(k-1)M+s}
	\end{equation}
	where $r, s \in [M]$ (and the initial conditions are the values of $p_s$ for $s\in [M]$).
	Note that these $\alpha^r_s$ do not depend on $k$. 
	The recurrences in Equations \ref{originalrecurrence} and \ref{newrecurrence} give $p_m$ as a convex combination of previous values, and so we may rewrite Equation~\ref{newrecurrence} as $q^k = A^k q^0$, where $A := \{\alpha^r_s\}_{r,s \in [M]}$ is a right stochastic square matrix.
	Furthermore it follows from the assumption $g=1$ that $A$ is primitive.
	Therefore the Perron-Frobenius Theorem implies that $A^k$ converges exponentially quickly to a matrix of the form $\vec{1}\vec{u}^T$, where $\vec{1}$ and $\vec{u}^T$ are the unique right and left eigenvectors of $A$ corresponding to the eigenvalue $\lambda = 1$. 
	This in turn implies that $q^k = A^k q^0$ converges to some uniform vector $(\gamma, \ldots, \gamma)$, and therefore that $\lim_{m \rightarrow \infty} p_m =\gamma$.

	Finally we argue that $\gamma = 1/\mu$. 
	We can show this by considering $C(N,J) := \E\left[\left| \{S_n\} \cap [N,J)\right|\right]$, 
	the mean number of times that $\{S_n\}$ intersects some interval $[N,J)$. 
	Since the $p_m$ converge, for fixed $J$ we may use linearity of expectation to choose $N$ large enough to guarantee that $C(N,J) \in J\gamma \pm \epsilon$ for any given $\epsilon > 0$. 
	On the other hand, by considering the $\{S_n\}$ as ``restarting'' when they reach the epoch preceding $N$, we may use the central limit theorem to argue that $C(N,J) \in \frac{J}{\mu} \pm O(J^{2/3})$. 
	Taking the limit as $J$ becomes large yields $\gamma = 1/\mu$. 
\end{proof}

\begin{proof} [Proof of Theorem~\ref{thm:appendixweekly}]
First, we may assume without loss of generality that the gcd of the supports of $X^1 \ldots X^T$ is $g=1$. 
To see this, define new integer random variables $\hat X^1 = X^1/g, \ldots, \hat X^T=X^T/g$.
If the claim holds for $\hat X^1, \ldots, \hat X^T$ then since $Y_{gN} = g \hat Y_{N}$ it follows that:
\begin{equation*}
\lim_{N\rightarrow\infty} \E[Y_N] = g \lim_{N' \rightarrow\infty} \E[\hat Y_{N'}] = \frac{\sum_\tau \E[g^2 (\hat X^\tau)^2]}{\E[g \hat Z]} = \frac{\sum_\tau \E[(X^\tau)^2]}{\E[Z]}
\end{equation*}

\noindent For a fixed threshold $N$, we are interested in the event that both $Y_N$ is a copy of $X^\tau$ and that $Y_N = k$. 
Then 
	\begin{align*}
	\Pr[\xi_\tau &\:\wedge\: Y_N = k] \\
	&= \Pr \left[ \bigvee_{n=1}^\infty \left(\xi_\tau \:\wedge\: X_n = k   \:\wedge\:  S_{n-1} \in [N-k, N) \right)  \right] \\
	&= \Pr \left[ \bigvee_{n=1}^\infty \left(X^\tau = k   \:\wedge\:  S_{nD + \tau - 1} \in [N-k, N) \right)  \right] \\
	&= \sum_{n=1}^\infty \Pr\left[X^\tau = k \:\wedge\: S_{nD + \tau -1} \in [N-k, N) \right]. \\
	&= \Pr[X^\tau = k] \sum_{n=1}^\infty \Pr\left[S_{nD + \tau - 1} \in [N-k, N) \right] \\
	&= \Pr[X^\tau = k] \E\left[ \left| \left\{ n \in \N : S_{nD+\tau-1} \in [N-k, N)  \right\} \right| \right]\\
	&= \Pr[X^\tau = k] \sum_{m = N-k}^{N-1} \E\left[ \left| \left\{ n \in \N : S_{nD+\tau-1} = m  \right\} \right| \right]
	\end{align*}

It follows that
\begin{align*}
	\lim_{N \rightarrow \infty}	\Pr[\xi_\tau & \:\wedge\: Y_N = k] \\
	&=  \Pr[X^\tau = k] \lim_{N\rightarrow \infty} \sum_{m = N-k}^{N-1} \Pr\left[ m \in \{S_{nD+\tau-1}\} \right] \\
	&=  \Pr[X^\tau = k] \frac{k}{\E[Z]},
\end{align*}
	where the last equality holds by passing the limit through the sum and applying Lemma~\ref{asymptoticallyuniform} to $Z$ in order to conclude that each of the probabilities on the right hand side approaches $1/\E[Z]$. 
	Therefore, 
	\begin{align*}
	\lim_{N \rightarrow \infty} \Pr[Y_N = k] &= \sum_{\tau}\lim_{N \rightarrow \infty} \Pr[\xi_\tau \:\wedge\: Y_N = k]\\
	&=\frac{k}{\E[Z]} \sum_\tau \Pr[X^\tau = k]
	\end{align*}
	We conclude that
	\begin{align*}
	\lim_{N\rightarrow \infty} \E[Y_N] &= \frac{1}{\E[Z]} \sum_\tau \sum_k k^2 \Pr[X^\tau = k] = \frac{ \sum_\tau \E[(X^\tau)^2]}{\E[Z]}.
\end{align*}
\end{proof}

\subsection{Plots for Strunk \& White and Civic Duty Achievers}
\label{sec_appendix:civic_duty}

The following plots show the $\beta$ deviation for the Civic Duty ($25,314$ users) and the Strunk \& White  ($22,496$ users) populations and the mean plots of actions for the groups inferred from the $S_u$ parameter. Code available at: \url{https://github.com/NickHoernle/icdm_2020}.

\begin{figure}[hb]
    \centering
        \subfloat[Civic Duty\label{1a1}]{%
            \includegraphics[width=0.8\linewidth]{./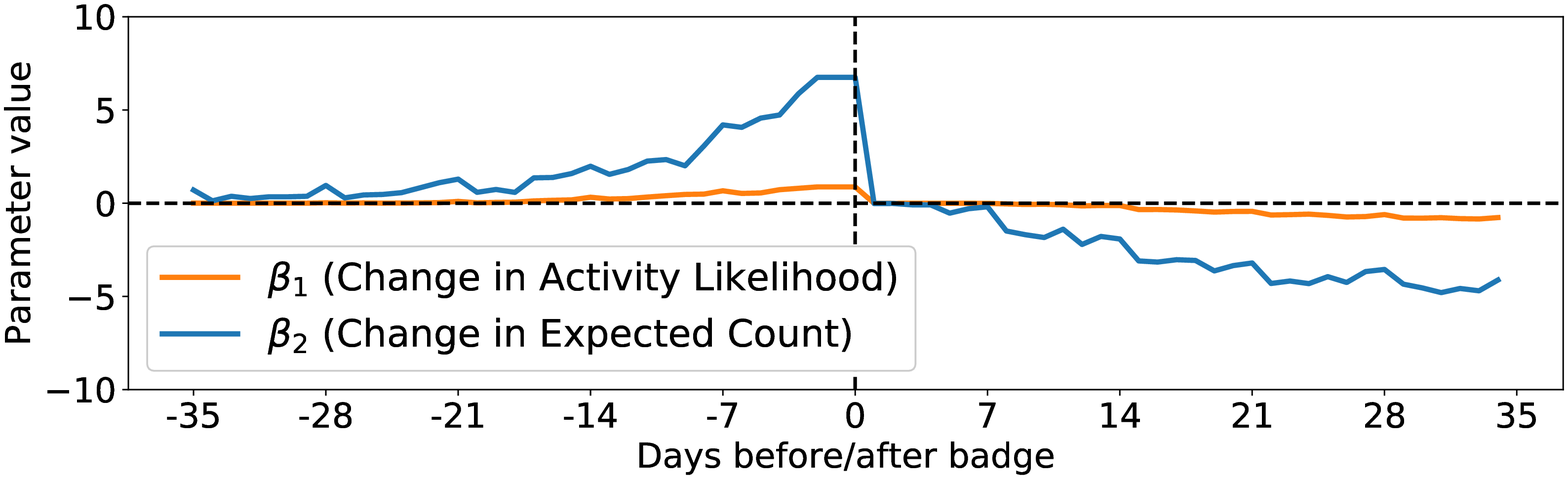}}
        \\
        \subfloat[Strunk\&White\label{1b1}]{%
            \includegraphics[width=0.8\linewidth]{./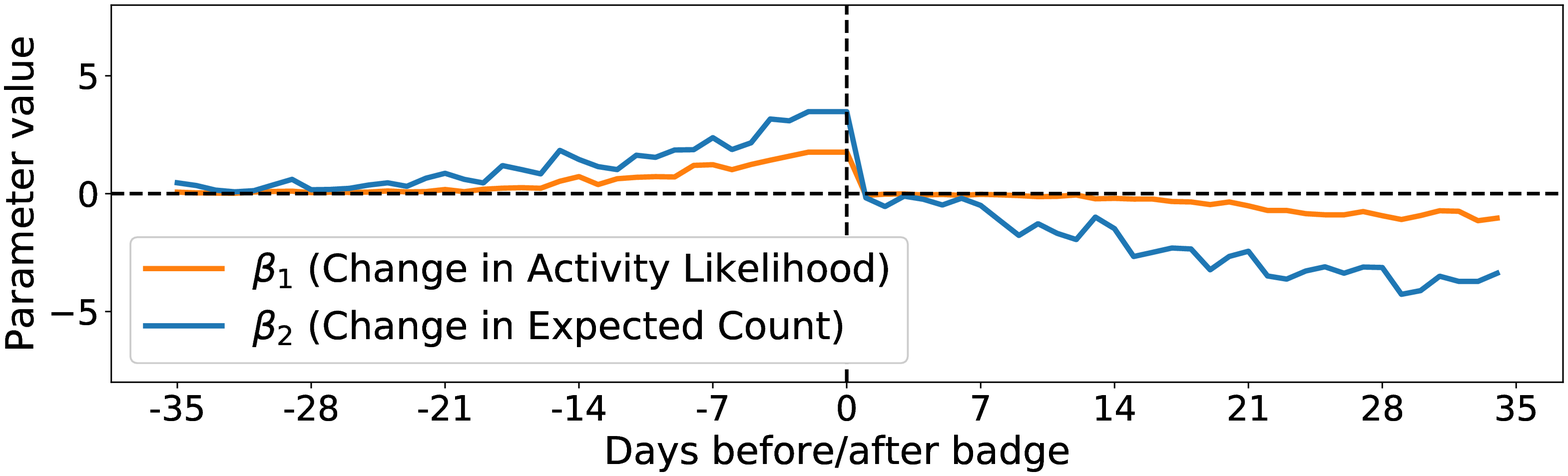}}
	\caption{Expected deviation from a user's preferred distribution $P_u$}
	\label{fig:steer_beta__civic_and_sw}
\end{figure}

\begin{figure}[hb]
    \centering
        \subfloat[Civic Duty\label{1a2}]{%
            \includegraphics[width=0.8\linewidth]{./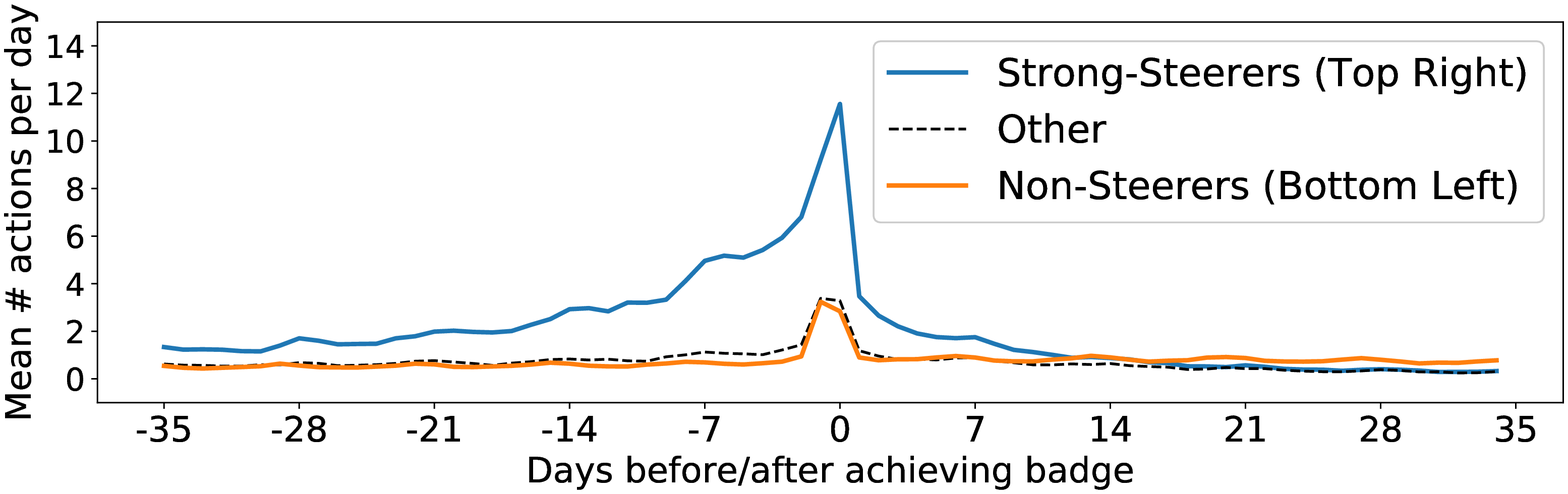}}
        \\
        \subfloat[Strunk\&White\label{1b2}]{%
            \includegraphics[width=0.8\linewidth]{./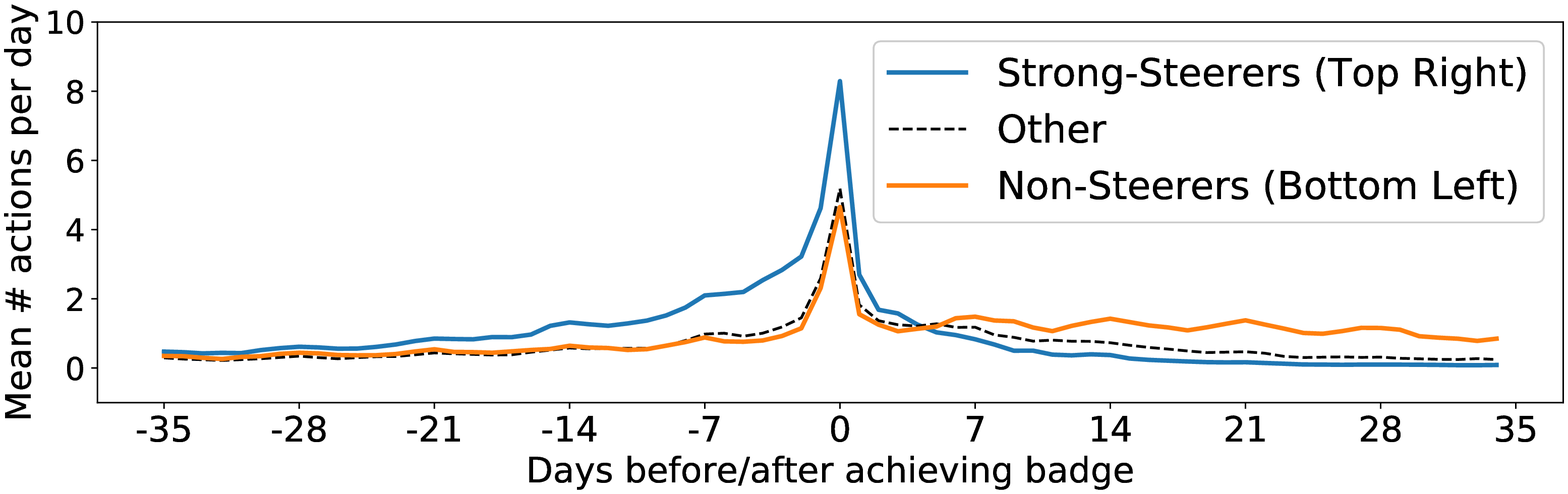}}
	\caption{Mean number of actions per day for users who are grouped by the magnitude of their steering strength parameters (S).}
	\label{fig:steer_mean_response__civic_and_sw}
\end{figure}
\end{document}